\documentclass{article}

\usepackage[preprint]{neurips_2026}

\usepackage[utf8]{inputenc}
\usepackage[T1]{fontenc}
\usepackage{hyperref}
\usepackage{url}
\usepackage{microtype}
\usepackage{graphicx}
\usepackage{subcaption}
\usepackage{wrapfig}
\usepackage{placeins}
\usepackage{booktabs}
\usepackage{amsfonts}
\usepackage{nicefrac}
\usepackage{amsmath}
\usepackage{amssymb}
\usepackage{mathtools}
\usepackage{amsthm}
\usepackage[table]{xcolor}
\usepackage{bbm}
\usepackage{multirow}
\usepackage{caption}

\usepackage[capitalize,noabbrev]{cleveref}
\setcitestyle{round}

\theoremstyle{plain}
\newtheorem{theorem}{Theorem}[section]
\newtheorem{proposition}[theorem]{Proposition}

\theoremstyle{definition}

\theoremstyle{remark}

\usepackage[disable,textsize=tiny]{todonotes}

\title{Beyond the Target: From Imitation to Collaboration in Speculative Decoding}

\author{%
  Jinze Li$^{1,2}$, Yixing Xu$^1$, Guanchen Li$^1$, Jinfeng Xu$^2$, Shuo Yang$^2$, Yang Zhang$^3$\\
   \textbf{Xuanwu Yin$^1$, Dong Li$^1$, Edith C.H. Ngai$^{2}$\thanks{Corresponding author.}, Emad Barsoum$^1$} \\
  $^1$Advanced Micro Devices, Inc., Beijing, China \\
  $^2$The University of Hong Kong \,  $^3$University of North Texas \\
  \texttt{\{lijinze-hku, shuo.yang, jinfeng\}@connect.hku.hk} \\
  \texttt{\{yixing.xu, Xuanwu.Yin, d.li, emad.barsoum\}@amd.com} \\
  \texttt{yang.zhang@unt.edu}, \,
  \texttt{chngai@eee.hku.hk} \\
}
\hypersetup{hidelinks}
\begin{document}

\maketitle

\begin{abstract}
Speculative decoding (SPD) accelerates large language model (LLM) inference by letting a smaller draft model propose multiple future tokens that are verified in parallel by a larger target model.
The dominant SPD paradigm treats the target model as the sole reliable teacher, accepting a draft token only when it exactly matches the target prediction.
This design implicitly assumes that the target is always the better choice at every position. In practice, this assumption does not hold. 
Although the draft is the weaker model overall, it is not uniformly inferior at the token level. In a meaningful fraction of cases where draft and target disagree, the draft's choice is the one that leads to the correct final answer.
Inspired by this, we introduce \textbf{Collaborative Speculative Decoding (CoSpec)}, a generalization of SPD that no longer treats the target model as the sole token-level authority.
CoSpec trains an arbitration policy via reinforcement learning to decide whether to accept tokens from the draft or target model, selectively accepting draft tokens at mismatches when doing so is likely to yield a correct final answer.
Experimental results show that CoSpec maintains substantial speedups while surpassing target-only performance. By shifting the emphasis from imitation to collaboration, CoSpec suggests a new perspective on speculative decoding.
\end{abstract}

\section{Introduction}

\begin{wrapfigure}[14]{r}{0.48\textwidth}
    \vspace{-8pt}
    \centering
    \includegraphics[width=0.48\textwidth]{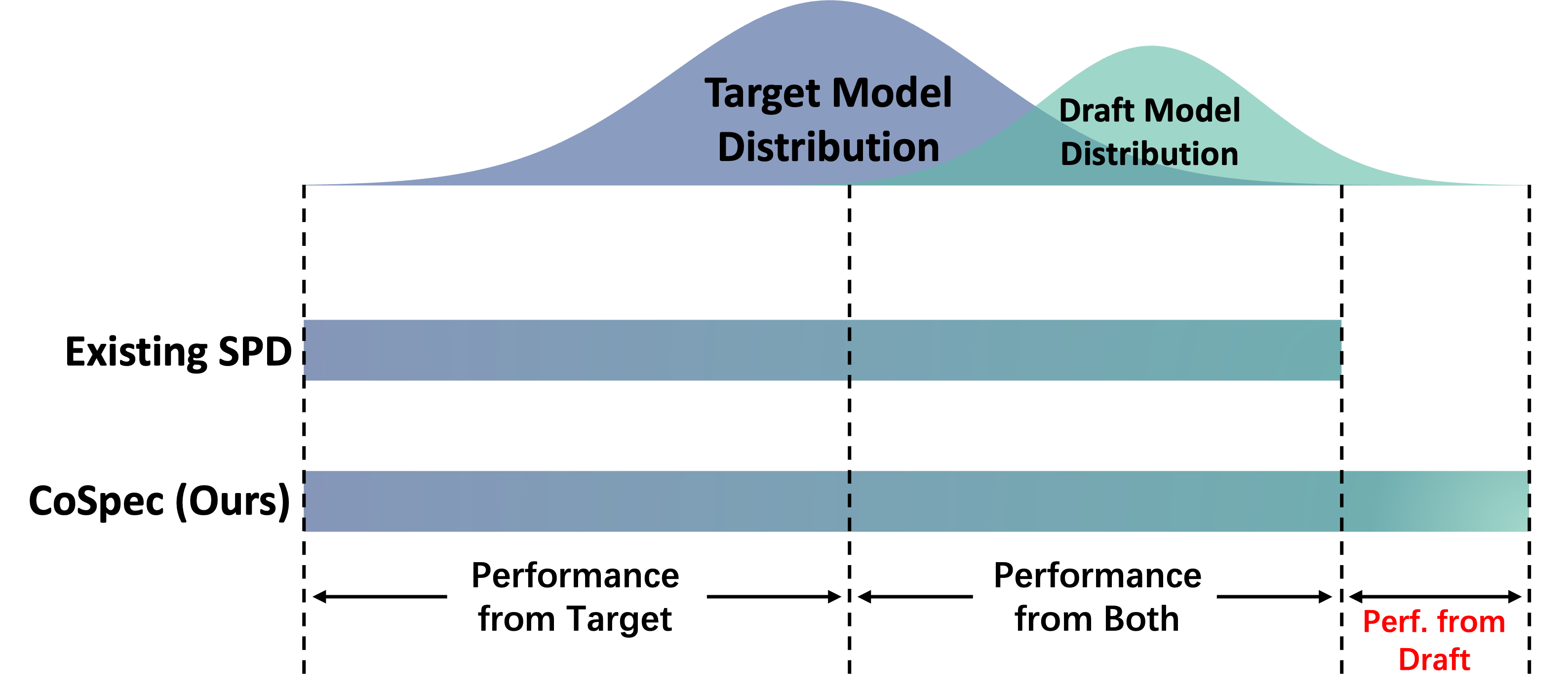}
    \caption{Illustration of the paradigm shift from imitation to collaboration. While existing SPD methods are strictly confined to the target model's distribution, CoSpec leverages the complementary strengths of the draft model (highlighted in \textcolor{red}{red}) to surpass the performance ceiling of the target model alone.}
    \label{fig:CoSpec_comparison}
    \vspace{-10pt}
\end{wrapfigure}

Large language models deliver strong performance across reasoning, code, dialogue, and knowledge-intensive tasks~\citep{llama3,deepseek-r1,qwen3}.
However, their deployment cost is dominated by autoregressive decoding, where each generated token requires a separate forward pass through the large model.
Speculative decoding addresses this bottleneck by separating token proposal from token verification: a smaller draft model first proposes a block of future tokens, and the larger target model then verifies the block in a single parallel pass~\citep{leviathan2023fast,chen2023accelerating}.
Because this draft-and-verify procedure can reduce latency while preserving the target model distribution in the standard sampling setting, it has become a central strategy for inference acceleration.

The dominant design principle in speculative decoding is therefore accuracy preservation with respect to the target model.
Under this principle, strict methods accept a draft token only when it exactly matches the target prediction.
Subsequent learned drafting methods, such as Medusa, EAGLE, and Gumiho, improve speed by making proposed tokens more likely to pass target verification~\citep{cai2024medusa,li2024eagle,li2024eagle2,li2025eagle,li2025gumihohybridarchitectureprioritize}.
More recent loosely verified methods further relax exact identity by accepting tokens that are semantically compatible with the target or unlikely to harm target quality~\citep{bachmann2025judge,holsman2025fuzzy,li2025fly,yoon2025selfjudge,wang2025reflective,pivot2025}.
Despite these differences, these approaches share the same underlying assumption: the target model remains the reference point for deciding whether a draft token is acceptable.
In other words, the draft is useful only insofar as it imitates, preserves, or safely approximates the target.

This paper starts from a different observation.
Although a draft model is weaker than the target model in aggregate, it is not uniformly inferior on every example or every local decision.
In reasoning and code generation, for example, two models from the same family can make different intermediate choices that lead to different final answers.
As a result, a draft--target mismatch is not always a draft error.
It can be a harmful deviation, a harmless paraphrase, or a useful alternative that avoids a target mistake.
Therefore, if the verifier always falls back to the target at such states, it may discard useful information carried by the draft model and miss the opportunity to exploit non-nested model strengths.

Motivated by this observation, we propose \textbf{Collaborative Speculative Decoding} (CoSpec), a speculative decoding framework that treats draft--target disagreement as an opportunity for collaboration rather than merely as a verification failure.
CoSpec retains the original goal of acceleration, but changes the role of the draft model.
Instead of requiring the draft to only imitate or approximate the target, CoSpec allows the draft to contribute when its continuation is more useful for the final task outcome.
In this sense, CoSpec is not designed as a distribution-preserving sampler at selected mismatches.
Its objective is final task utility under an acceleration constraint, which is especially relevant for math reasoning, code generation, factual QA, and benchmark-driven deployment, where outputs are judged primarily by answer correctness or pass rate.
From this perspective, a mismatch state naturally induces two possible continuations: one obtained by following the draft token and the other obtained by following the target token.
A target-faithful verifier is restricted to the target branch, whereas a collaborator can choose the branch with higher expected final utility.
Whenever the draft branch is better on a nonzero subset of target failure states, collaboration can create a positive complementarity gap over target-only decoding.
\Cref{fig:CoSpec_comparison} illustrates this shift from target-bounded imitation to collaborative decoding that can exploit useful draft alternatives beyond the target-only ceiling.

Concretely, CoSpec keeps the computational structure of speculative decoding: the draft model proposes a block of future tokens, and the target model verifies this block in a single parallel pass.
Matched tokens are still force-accepted, so the standard efficiency pattern of speculative decoding is preserved.
The key change happens at disagreement points.
Instead of rejecting the draft at the first mismatch or applying a fixed relaxed verification rule, CoSpec invokes a learned arbitration policy.
The policy observes the prompt, the verified prefix, the draft block, and the target verification tokens, and then decides whether to keep the draft token or fall back to the target token at each mismatch.
By selectively permitting informative mismatches, the arbitration policy increases the number of tokens accepted per verification round, better leveraging the draft model’s capacity.
Empirically, CoSpec improves target-level accuracy while achieving substantial speedups, consistently attaining state-of-the-art performance.

Our contributions are summarized as follows:
\begin{itemize}
\item We formulate a generalized paradigm for speculative decoding, shifting the relationship between draft and target models from pure imitation to active collaboration.

\item Building on this paradigm, we introduce CoSpec, which leverages an arbitration policy to determine which tokens to adopt when mismatches occur between the draft and target models, achieving high final accuracy while preserving acceleration.

\item Through extensive experimental evaluations, we demonstrate the effectiveness of our method. CoSpec consistently achieves state-of-the-art performance across benchmarks.
\end{itemize}

\section{Related Work}

\paragraph{Target preserving speculative decoding.}
Speculative decoding was introduced as a lossless acceleration method where a draft model proposes tokens and a target model verifies them in parallel~\citep{leviathan2023fast,chen2023accelerating}.
Subsequent work improves the draft proposal process using learned heads, recurrent or feature based drafters, retrieval, early exit, dynamic trees, block verification, or systems optimizations~\citep{cai2024medusa,li2024eagle,li2024eagle2,li2025eagle,he2023rest,he2023retrieval,elhoushi2024self,chen2024csdraft,sun2025block,li2025gumihohybridarchitectureprioritize}.
These methods improve acceptance length or wall clock throughput while preserving the target distribution or the target output quality.
CoSpec is complementary to this line of work because it keeps the draft-and-verify execution pattern but changes the decision problem at mismatches.

\paragraph{Loosely verified and utility biased speculative decoding.}
A growing line of work relaxes exact token matching.
Judge Decoding trains a verifier to identify mismatched drafts that remain acceptable~\citep{bachmann2025judge}.
Fuzzy speculative decoding provides a tunable accuracy and runtime tradeoff~\citep{holsman2025fuzzy}.
FLy uses target corrective behavior to accept semantically valid mismatches without training~\citep{li2025fly}.
SelfJudge, reflective verification, sequential approximate verification, and pivot aware decoding further reduce unnecessary rejections through learned, self supervised, semantic, or utility aware tests~\citep{yoon2025selfjudge,wang2025reflective,zhong2025sprinter,pivot2025}.
These methods mainly ask whether a draft mismatch can be accepted while preserving or approximating target utility.
Several recent methods make the utility objective more explicit.
Reward Guided Speculative Decoding uses a process reward model to bias reasoning trajectories toward high reward steps~\citep{liao2025rsd}, and alignment augmented speculative decoding (AASD) uses retrieval based drafts with alignment sampling and conditional verification to improve long context generation quality~\citep{wang2025aasd}.
This line of work aims to achieve higher speedup while minimally affecting target-model performance, whereas our method aims to obtain strong acceleration while improving the target model’s task performance.

\paragraph{Cascades and multi model inference.}
Language model cascades and speculative cascades also combine models to improve cost and quality tradeoffs~\citep{chen2023frugalgpt,ong2024routellm,ding2024hybridllm,jiang2023llmblender,wang2024moa,chen2024csdraft,narasimhan2025fastercascades}.
They typically use input level routing, response level ranking, or multi model aggregation to decide which model output to use.
CoSpec instead operates at token mismatch states inside speculative decoding rounds.
The target model is already invoked in parallel verification mode, and the learned decision is whether to use the draft or target token at a local disagreement.
This gives CoSpec a different operating point: it can exploit subproblem level complementarity while retaining speculative batching.

\section{Method}
\label{sec:method}

\subsection{Preliminaries}
\label{sec:prelim}

\paragraph{Notation.}
Let $x$ denote the input prompt and $\mathcal{V}$ the vocabulary.
We consider a large target model $\mathcal{M}_{\mathrm{T}}$ and a smaller draft model $\mathcal{M}_{\mathrm{D}}$ that share $\mathcal{V}$.
Both models decode autoregressively.

\paragraph{Speculative decoding.}
Speculative decoding operates in rounds.
Let $y^{(<r)}$ be the verified prefix at the beginning of round $r$ and let $K$ be the draft length.
The draft model proposes
\begin{equation}
\hat{\mathbf{y}}^{(r)}=(\hat{y}^{(r)}_1,\dots,\hat{y}^{(r)}_K)
\end{equation}
conditioned on $(x,y^{(<r)})$.
The target model then performs one teacher-forced forward pass over $[x, y^{(<r)}, \hat{y}^{(r)}_1, \dots, \hat{y}^{(r)}_K]$, producing one next-token prediction at each of the $K+1$ rightmost positions:
\begin{equation}
(\tilde{y}^{(r)}_1,\dots,\tilde{y}^{(r)}_{K+1}).
\end{equation}
The first $K$ predictions verify the draft positions and $\tilde{y}^{(r)}_{K+1}$ is the bonus token after the final draft token.
We define
\begin{equation}
\Delta^{(r)}_i=\mathbbm{1}[\hat{y}^{(r)}_i=\tilde{y}^{(r)}_i].
\end{equation}
Exact-match speculative decoding accepts draft tokens until the first mismatch and then emits the target token at that position.
The number of generated tokens in round $r$ is
\begin{equation}
\label{eq:strict-spd-len}
s^{(r)}=
\begin{cases}
\min\{i\mid \Delta^{(r)}_i=0\}, & \exists i: \Delta^{(r)}_i=0, \\
K+1, & \text{otherwise}.
\end{cases}
\end{equation}
For a run of $R$ rounds, the mean accepted length is
\begin{equation}
\label{eq:mean-tau}
\tau=\frac{1}{R}\sum_{r=1}^{R}s^{(r)}.
\end{equation}
We report speedup as the end-to-end throughput ratio relative to autoregressive target decoding under the same hardware, batching, prompt, and output length setting.

\paragraph{Stochastic decoding.}
Let $T$ denote the sampling temperature.
For $T=1$ experiments, distribution-preserving baselines use standard speculative sampling with acceptance ratio correction.
CoSpec does not claim target distribution preservation in the stochastic setting.
It uses the sampled draft candidate and the corresponding target verification candidate as two actions for task utility arbitration.
This design is appropriate when the evaluation metric is final task utility rather than exact sampling from the target distribution.

\begin{figure}[t]
  \centering
  \includegraphics[width=0.95\textwidth]{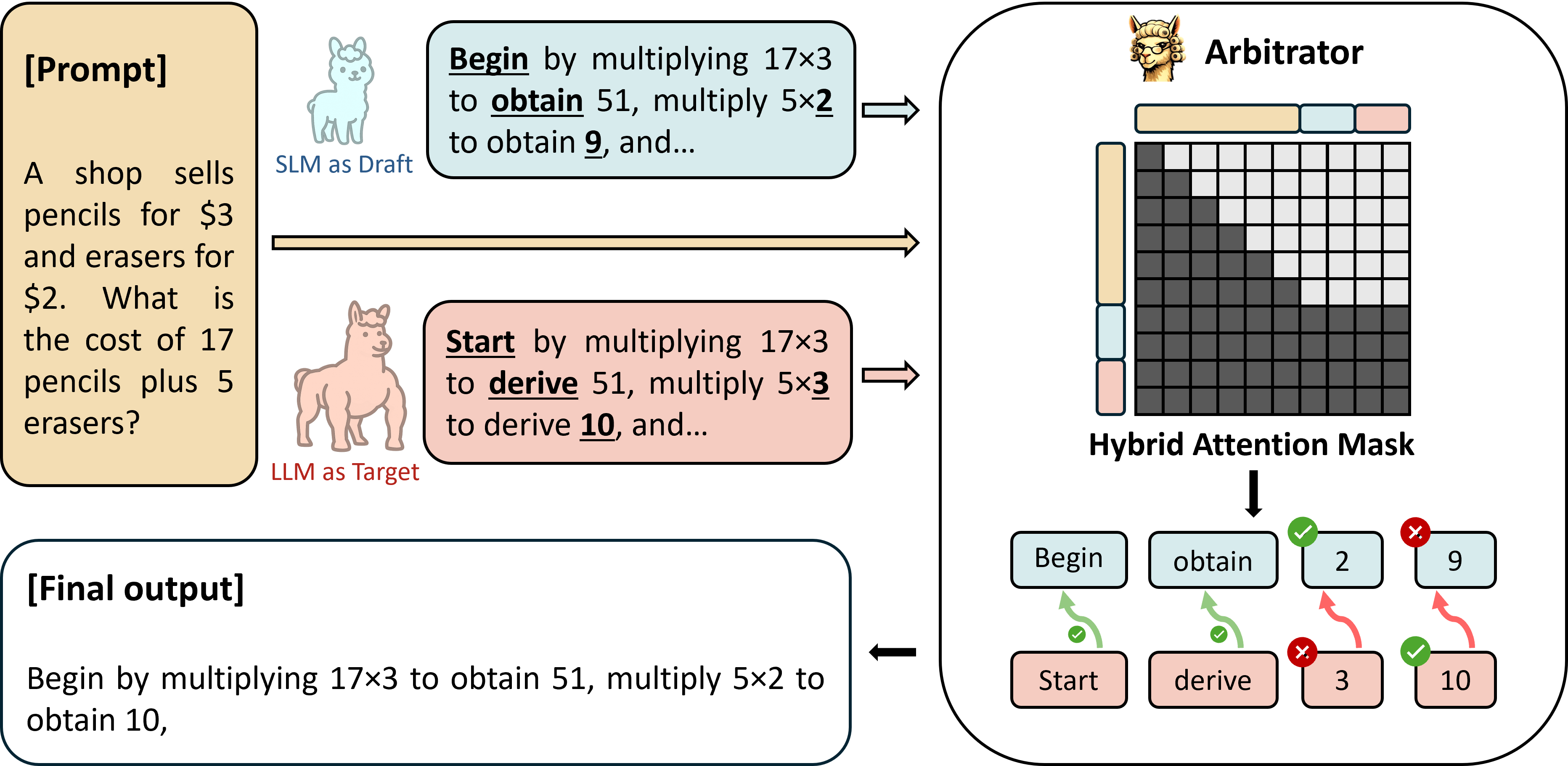}
  \caption{Overview of CoSpec. The draft proposes a block and the target verifies it in one teacher-forced pass. A learned arbitrator receives the context, draft block, and target verification tokens. It uses a hybrid attention mask and decides at each mismatch whether to keep the draft token or fall back to the target token.}
  \label{fig:overview}
  \vspace{-1.5em}
\end{figure}

\subsection{A Utility View of Collaborative SPD}
\label{sec:utility_view}

Let \(U(x,y)\in[0,1]\) be the task utility of a completed response, such as exact answer correctness or pass at one. At a reached mismatch state
\(m=(x,y^{(<r)},i,\hat y_{1:K}^{(r)},\tilde y_{1:K+1}^{(r)},\Delta_{1:K}^{(r)})\), define two counterfactual branch utilities:
\begin{equation}
u_D(m)=\mathbb{E}[U(x,Y)\mid m,\text{force }\hat y_i^{(r)}],\quad
u_T(m)=\mathbb{E}[U(x,Y)\mid m,\text{force }\tilde y_i^{(r)}].
\end{equation}
Here \(Y\) is the completed response. The draft branch appends \(\hat y_i^{(r)}\) and continues scanning the current block, whereas the target branch appends \(\tilde y_i^{(r)}\), terminates the current round, and resumes CoSpec from the next round. The expectation is over the fixed reference continuation policy \(\pi_{\rm ref}\) under these dynamics.
A target-faithful verifier uses the target branch when a mismatch is rejected.
An oracle arbitrator chooses the branch with larger expected utility.

\begin{proposition}[Local complementarity gap]
\label{prop:complementarity_gap}
Fix \(\pi_{\rm ref}\), and let \(\mathcal S\) be the reached-mismatch-state distribution induced by the prompts, draft/target decoding, and \(\pi_{\rm ref}\). The expected local oracle utility exceeds the target-faithful branch utility by
\begin{equation}
\mathbb{E}_{m\sim\mathcal S}\!\left[\max\{u_D(m),u_T(m)\}\right]
-\mathbb{E}_{m\sim\mathcal S}\!\left[u_T(m)\right]
=
\mathbb{E}_{m\sim\mathcal S}\!\left[(u_D(m)-u_T(m))_+\right],
\end{equation}
where \((z)_+ := \max(z,0)\). The gap is positive exactly when the draft branch has larger expected utility than the target branch on a nonzero measure of reached mismatch states.
\end{proposition}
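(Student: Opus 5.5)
The plan is to reduce the statement to a pointwise algebraic identity and then integrate it against \(\mathcal S\). For any real numbers \(a,b\) we have \(\max\{a,b\} = b + (a-b)_+\). If \(a\le b\), both sides equal \(b\). If \(a>b\), both sides equal \(a\). Applying this with \(a=u_D(m)\) and \(b=u_T(m)\) gives, for every reached mismatch state \(m\),
\[
\max\{u_D(m),u_T(m)\} - u_T(m) = \bigl(u_D(m)-u_T(m)\bigr)_+ .
\]

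Next I take expectations over \(m\sim\mathcal S\). To do this legitimately, I first check integrability and measurability. Since \(U\in[0,1]\), both \(u_D(m)\) and \(u_T(m)\) are conditional expectations of a \([0,1]\)-valued variable, so they lie in \([0,1]\). With \(\pi_{\rm ref}\) fixed, each is a measurable function of \(m\), being a conditional expectation under the induced dynamics. Hence all three integrands \(\max\{u_D,u_T\}\), \(u_T\) and \((u_D-u_T)_+\) are bounded and measurable, and all expectations are finite. Linearity of expectation then splits \(\mathbb{E}[\max\{u_D,u_T\} - u_T]\) into the difference of the two expectations on the left-hand side, which yields the displayed identity.

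For the positivity characterization, let \(g(m)=(u_D(m)-u_T(m))_+\). This function is nonnegative and measurable. A standard fact for nonnegative measurable functions is that \(\mathbb{E}_{\mathcal S}[g]>0\) if and only if \(\mathcal S(\{g>0\})>0\). Indeed, if \(\mathcal S(\{g>0\})=0\), then \(g=0\) almost surely and the expectation vanishes. Conversely, if \(\{g>0\}\) has positive measure, continuity from below gives \(\mathcal S(\{g>1/n\})>0\) for some \(n\), and then \(\mathbb{E}[g]\ge \tfrac1n\,\mathcal S(\{g>1/n\})>0\). Finally, \(\{g>0\}=\{m : u_D(m)>u_T(m)\}\), which is exactly the set of reached mismatch states where the draft branch has strictly larger expected utility. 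This completes the equivalence.

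The only step needing care is the well-definedness of \(u_D\) and \(u_T\) as measurable functions of \(m\). This is where fixing \(\pi_{\rm ref}\) matters: the continuation dynamics after forcing either token must not depend on the arbitration policy being evaluated, so that \(\mathcal S\) and the two branch utilities are defined under one common law. I will state this explicitly and note that, for a finite vocabulary and finite-length generation, \(\mathcal S\) is a discrete distribution. In that case measurability is automatic, and the argument reduces to finite sums.
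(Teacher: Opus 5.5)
Your proposal is correct and follows the paper's proof: apply the pointwise identity \(\max\{u_D,u_T\}=u_T+(u_D-u_T)_+\) at each reached state and take expectation over \(m\sim\mathcal S\). Your boundedness and measurability check and your argument that a nonnegative function has positive expectation exactly when it is positive on a set of positive \(\mathcal S\)-measure fill in details the paper leaves implicit, including the ``exactly when'' clause.
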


\begin{proof}
The identity follows from \(\max\{u_D,u_T\}=u_T+(u_D-u_T)_+\) at each reached state \(m\). Taking expectation over \(m\sim\mathcal S\) gives the result.
\end{proof}

Proposition~\ref{prop:complementarity_gap} gives the central target for CoSpec.
The aim is not to make every draft token pass target verification.
The aim is to approximate the utility-maximizing branch selector while preserving the parallel verification structure that gives speculative decoding its speed advantage.
\Cref{fig:overview} gives an overview of this procedure, where the arbitrator compares the draft and target candidates within each speculative round and decides which branch to follow at mismatches.

\subsection{Collaborative Arbitration}
\label{sec:cospec_algo}

CoSpec learns an arbitration policy $\pi_\theta$ that decides whether a mismatch should keep the draft token or fall back to the target token.
Let
\begin{equation}
c^{(r)}=[x,y^{(<r)}]
\end{equation}
be the prompt and verified prefix.
For round $r$, the arbitrator input is
\begin{equation}
\label{eq:arb_input}
q^{(r)}=\left[c^{(r)},\mathrm{[SEP]},\hat{y}^{(r)}_{1:K},\mathrm{[SEP]},\tilde{y}^{(r)}_{1:K},\mathrm{[SEP]}\right].
\end{equation}
The bonus token $\tilde{y}^{(r)}_{K+1}$ is omitted because arbitration only acts on the $K$ paired draft--target candidates.
Let $l_c \triangleq |c^{(r)}|$ and let $l = l_c + 2K + 3$ be the total input length.
We apply a hybrid attention pattern: the context portion attends causally within itself, while the draft--target region uses fully bidirectional attention within that region and may also attend to the entire context.
Information flows from context to draft/target and within draft/target in both directions, but never from draft/target back to context.
The draft and target models themselves remain causal and frozen. The hybrid mask only changes the diagnostic view available to the arbitrator.

The policy backbone is a language model with LoRA parameters $\psi$.
Let $\mathcal{A}$ denote the hybrid attention mask described above.
It produces hidden states
\begin{equation}
h^{(r)}=\mathrm{LM}_{\psi}(q^{(r)},\mathcal{A}).
\end{equation}
For each draft position $i \in \{1, \dots, K\}$, a lightweight head $g_\phi$ maps the hidden state at the position of $\hat{y}^{(r)}_i$ within $q^{(r)}$ to a scalar logit:
\begin{equation}
z^{(r)}_i=g_\phi\!\left(h^{(r)}_{l_c+1+i}\right).
\end{equation}
The action $a^{(r)}_i\in\{0,1\}$ denotes whether to accept the draft token:
\begin{equation}
a^{(r)}_i=1 \Rightarrow \hat{y}^{(r)}_i,
\qquad
a^{(r)}_i=0 \Rightarrow \tilde{y}^{(r)}_i.
\end{equation}
With $\theta = \{\psi, \phi\}$ collecting all trainable parameters, the acceptance probability is
\begin{equation}
\label{eq:policy-prob}
\pi_\theta(a^{(r)}_i=1\mid q^{(r)})=\sigma(\tilde{z}^{(r)}_i),
\end{equation}
where $\sigma(\cdot)$ denotes the logistic sigmoid and
\begin{equation}
\tilde{z}^{(r)}_i=
\begin{cases}
+\infty, & \Delta^{(r)}_i=1, \\
z^{(r)}_i, & \Delta^{(r)}_i=0.
\end{cases}
\end{equation}
Matched tokens are forced-accepted and do not create trainable decisions.
At mismatches, CoSpec thresholds or samples from the learned policy.

Within a round, CoSpec stops at the first rejected mismatch:
\begin{equation}
\label{eq:cospec_length}
s^{(r)}_{\mathrm{CoSpec}}=
\begin{cases}
\min\{i\mid \Delta^{(r)}_i=0 \wedge a^{(r)}_i=0\}, & \exists i: \Delta^{(r)}_i=0 \wedge a^{(r)}_i=0, \\
K+1, & \text{otherwise}.
\end{cases}
\end{equation}
Thus $s^{(r)}_{\mathrm{CoSpec}} \in \{1, \dots, K+1\}$ is the round-terminating index, equal to the number of tokens emitted in round $r$.
If the first rejected mismatch occurs at position $i$, the output appends all tokens before $i$ and then appends $\tilde{y}^{(r)}_i$.
If no mismatch is rejected, the output appends all $K$ draft tokens plus the bonus target token.
This keeps the standard round structure while allowing draft tokens to pass through selected mismatches.

\subsection{Reinforcement Learning Training}
\label{sec:rl_train}

We optimize the arbitrator with a two-stage pipeline.
The supervised stage initializes $\pi_\theta$ from judge-labeled mismatch preferences over rollouts of vanilla SPD. We defer its details to Appendix~\ref{app:implementation} and denote the resulting policy by $\pi_{\mathrm{ref}}$.
The reinforcement learning stage, described below, optimizes decisions under full CoSpec rollouts.
All target and draft model weights remain frozen, only the arbitrator LoRA parameters $\psi$ and head $\phi$ are trained.

\paragraph{Rollouts and actions.}
For each prompt $x$, we generate $M$ CoSpec rollouts
\begin{equation}
\mathcal{G}(x)=\{\xi^{(1)},\xi^{(2)},\dots,\xi^{(M)}\},
\end{equation}
where each rollout $\xi^{(m)} = \{(\hat{\mathbf{y}}^{(r)}, \tilde{\mathbf{y}}^{(r)}, \mathbf{a}^{(r)})\}_{r=1}^{R^{(m)}}$ records the draft block, target verification, and arbitrator actions across all rounds until EOS.
At training time, mismatch actions are sampled from $\pi_\theta$.
At evaluation time, we use a deterministic threshold $\lambda$:
\begin{equation}
\label{eq:infer_thre}
a^{(r)}_i=\mathbbm{1}[\pi_\theta(a^{(r)}_i=1\mid q^{(r)})>\lambda].
\end{equation}
Only mismatch decisions up to the first rejection can affect the current round.
For rollout $\xi^{(m)}$, the effective action set in round $r$ is
\begin{equation}
\mathcal{I}^{(m)}_r=
\{i: \Delta^{(r)}_i=0 \wedge i\le \min\{s^{(r)}_{\mathrm{CoSpec}},K\}\}.
\end{equation}

\paragraph{Reward shaping.}
The terminal correctness signal is $\mathrm{Corr}(\xi^{(m)})\in\{0,1\}$.
For correct rollouts, we reward accepting useful span beyond the first mismatch.
Let
\begin{equation}
i^\star_r=\min\{i\in\{1,\dots,K\}\mid \Delta^{(r)}_i=0\}
\end{equation}
when a mismatch exists, and define
\begin{equation}
\rho_r=\frac{s^{(r)}_{\mathrm{CoSpec}}-i^\star_r}{(K+1)-i^\star_r}.
\end{equation}
For correct rollouts, each effective mismatch decision in round $r$ receives $g_{r,i}=\rho_r$.
If round $r$ has no mismatch, $\mathcal{I}^{(m)}_r = \emptyset$ and the round contributes no terms to the rollout return.

For incorrect rollouts, CoSpec assigns negative credit to two decision types.
The first type is accepting a draft token that is strongly disfavored by the target.
The second type is rejecting a mismatch and thereby committing to a target branch that can also be wrong.
Let $i^{(m)}_{\mathrm{rej},r}$ be the first rejected mismatch in round $r$, using $K+1$ as the sentinel for no rejection.
Define accepted and rejected decision sets

\begin{equation}
\mathcal{S}^{(m)}_{r,\mathrm{acc}}=\{i\in\mathcal{I}^{(m)}_r\mid a^{(r)}_i=1, i<i^{(m)}_{\mathrm{rej},r}\},
\end{equation}
\begin{equation}
\mathcal{S}^{(m)}_{r,\mathrm{rej}}=
\begin{cases}
\{i^{(m)}_{\mathrm{rej},r}\}, & i^{(m)}_{\mathrm{rej},r}\le K, \\
\emptyset, & i^{(m)}_{\mathrm{rej},r}=K+1.
\end{cases}
\end{equation}

Let
\begin{equation}
\ell^{(r)}_i=\log p_{\mathcal{M}_{\mathrm{T}}}(\hat{y}^{(r)}_i\mid x,y^{(<r)},\hat{y}^{(r)}_{<i}).
\end{equation}
The failure reward is
\begin{equation}
\label{eq:failure_penalty}
g_{r,i}=-\alpha |\ell^{(r)}_i|\mathbbm{1}[i\in\mathcal{S}^{(m)}_{r,\mathrm{acc}}]-\beta \mathbbm{1}[i\in\mathcal{S}^{(m)}_{r,\mathrm{rej}}],
\end{equation}
where $|\ell^{(r)}_i| = -\ell^{(r)}_i$ is the surprisal of $\hat{y}^{(r)}_i$ under the target distribution.
The target log-probability term is a risk regularizer, not a target-authority rule.
It discourages high-variance exploration when the draft token is far outside the target support on failed rollouts.
The rejection penalty prevents the policy from treating fallback to target as a free action when the target branch can also fail.
The rollout return is
\begin{equation}
J(\xi^{(m)})=\sum_{r}\sum_{i\in\mathcal{I}^{(m)}_r}g_{r,i}.
\end{equation}

\paragraph{Policy update.}
We normalize returns within each prompt group, following the value-free design of GRPO.
If all rollouts for a prompt are incorrect, the shaped negative returns still rank arbitration traces by relative harm, and we downweight such groups by $\eta_{\mathrm{fail}}$:
\begin{equation}
\label{eq:adv}
\mathrm{Adv}^{(m)}=\eta_{\mathrm{fail}}\frac{J(\xi^{(m)})-\mu_x}{\sigma_x+\epsilon}.
\end{equation}
For all-correct groups we use the same normalization with $\eta_{\mathrm{fail}}=1$.
For mixed groups, we rescale positive returns so that correct and incorrect traces have balanced magnitudes before standardization.
Let $\mathcal{G}_+(x)$ and $\mathcal{G}_-(x)$ be correct and incorrect rollouts.
We set

\begin{equation}
\label{eq:gamma}
\gamma_x=\frac{\sum_{\xi\in\mathcal{G}_-(x)}|J(\xi)|}{\sum_{\xi\in\mathcal{G}_+(x)}J(\xi)+\epsilon},
\end{equation}

and define $\bar{J}^{(m)} = \gamma_x J(\xi^{(m)})$ for $\xi^{(m)} \in \mathcal{G}_+(x)$ and $\bar{J}^{(m)} = J(\xi^{(m)})$ otherwise. We then substitute $\bar{J}^{(m)}$ for $J(\xi^{(m)})$ in~\eqref{eq:adv}, with $\mu_x,\sigma_x$ recomputed over $\{\bar{J}^{(m)}\}_{m=1}^{M}$. By construction $\sum_m \bar{J}^{(m)} = 0$, so the mixed-group mean is centered while preserving signs.

We allocate rollout advantages to effective decisions: for correct rollouts we set $\mathrm{Adv}_{r,i}^{(m)} = \rho_r \cdot \mathrm{Adv}^{(m)}$, and for incorrect rollouts $\mathrm{Adv}_{r,i}^{(m)} = \frac{|g_{r,i}|}{\sum_{r',i'} |g_{r',i'}|} \cdot \mathrm{Adv}^{(m)}$, so that good rollouts assign more credit to rounds with larger $\rho_r$ and bad rollouts distribute penalties proportional to the shaped per-decision magnitude in Eq.~\eqref{eq:failure_penalty}.
The policy minimizes the PPO-style clipped objective

\begin{equation}
\begin{aligned}
\mathcal{L}_{\mathrm{RL}}=-\sum_{m,r,i}\min &\left(r^{(m)}_{r,i}(\theta)\mathrm{Adv}^{(m)}_{r,i},
\mathrm{clip}(r^{(m)}_{r,i}(\theta),1-\epsilon_{\mathrm{clip}},1+\epsilon_{\mathrm{clip}})\mathrm{Adv}^{(m)}_{r,i}\right) \\
&-\beta_H\mathcal{H}(\pi_\theta)+\beta_{\mathrm{KL}}\mathrm{KL}(\pi_\theta\|\pi_{\mathrm{ref}}),
\end{aligned}
\end{equation}

where $\pi_{\mathrm{ref}}$ is the supervised warm-up policy and $r^{(m)}_{r,i}(\theta) = \pi_\theta(a^{(r)}_i \mid q^{(r)}) / \pi_{\theta_{\mathrm{old}}}(a^{(r)}_i \mid q^{(r)})$ is the probability ratio between the current and rollout-time policies.

\vspace{-1em}

\section{Experiments}
\label{sec:experiments}
\vspace{-0.5em}
\subsection{Experimental Setup}
\label{sec:exp_setup}

We evaluate CoSpec with two target and draft pairs from the same model family: LLaMA-3.3-70B-Instruct with LLaMA-3.1-8B-Instruct, and Qwen3-32B with Qwen3-4B.
Unless otherwise stated, the arbitrator uses the draft model backbone with LoRA adapters and a binary decision head.
The default draft length is $K=25$, and the default inference threshold is $\lambda=0.6$.
The arbitrator is trained on OpenR1-Math-220k and then evaluated without task specific retraining.
The main benchmarks are GSM8K~\citep{cobbe2021training}, HumanEval~\citep{chen2021evaluating}, and MBPP~\citep{austin2021program}.
The transfer suite includes MT-Bench~\citep{zheng2023judging}, TriviaQA~\citep{joshi2017triviaqa}, GPQA~\citep{rein2023gpqa}, MGSM~\citep{shi2022language}, and MATH-500~\citep{lightman2023lets}.

We compare with target-preserving speculative decoding methods, including vanilla speculative sampling (SpS), REST, and EAGLE-3; loosely verified methods, represented by FLy; and utility-improving methods, represented by AASD, which is closest to our goal of improving both speed and task accuracy. Target-only autoregressive decoding is the reference for both speedup and task score. Speed denotes wall-clock speedup over target-only decoding, $\tau$ is the mean accepted-token number per target invocation, and Score follows the standard benchmark metric. All speedups include arbitrator overhead.

\begin{table*}[t]
\centering
\scriptsize
\caption{Main speed--quality comparison. ``Target-only'' is autoregressive decoding of the target model and is included explicitly to show the task-score improvement over the base model. Mean is averaged over GSM8K, HumanEval, and MBPP.}
\vspace{-1em}
\label{tab:main_speedup_tau}
\resizebox{\textwidth}{!}{%
\begin{tabular}{lllcccccccccccc}
\toprule
\multirow{2}{*}{Setting} & \multirow{2}{*}{Target} & \multirow{2}{*}{Method} & \multicolumn{3}{c}{GSM8K} & \multicolumn{3}{c}{HumanEval} & \multicolumn{3}{c}{MBPP} & \multicolumn{3}{c}{Mean} \\
\cmidrule(lr){4-6} \cmidrule(lr){7-9} \cmidrule(lr){10-12} \cmidrule(lr){13-15}
& & & Speed & $\tau$ & Score & Speed & $\tau$ & Score & Speed & $\tau$ & Score & Speed & $\tau$ & Score \\
\midrule
\multirow{13}{*}{$T=0$}
& \multirow{7}{*}{LLaMA-70B}
& Target only & 1.00$\times$ & 1.00 & 95.60 & 1.00$\times$ & 1.00 & 85.37 & 1.00$\times$ & 1.00 & 89.18 & 1.00$\times$ & 1.00 & 90.05 \\
& & SpS & 2.11$\times$ & 9.87 & 95.60 & 1.61$\times$ & 9.87 & 85.37 & 1.68$\times$ & 10.43 & 89.18 & 1.80$\times$ & 10.06 & 90.05 \\
& & REST & 1.74$\times$ & 2.00 & 95.60 & 1.88$\times$ & 2.22 & 85.37 & 2.01$\times$ & 2.34 & 89.18 & 1.88$\times$ & 2.19 & 90.05 \\
& & EAGLE-3 & 3.67$\times$ & 5.63 & 95.60 & 3.94$\times$ & 5.73 & 85.37 & 3.77$\times$ & 5.45 & 89.18 & 3.79$\times$ & 5.60 & 90.05 \\
& & FLy & 2.68$\times$ & 11.65 & 94.84 & 2.75$\times$ & 12.20 & 84.54 & 2.45$\times$ & 11.76 & 88.62 & 2.63$\times$ & 11.87 & 89.33 \\
& & AASD & 1.48$\times$ & 1.72 & 95.92 & 1.62$\times$ & 1.90 & 85.98 & 1.72$\times$ & 2.08 & 90.03 & 1.61$\times$ & 1.90 & 90.64 \\
& & \cellcolor{gray!20}\textbf{CoSpec} & \cellcolor{gray!20}\textbf{3.82}$\times$ & \cellcolor{gray!20}\textbf{20.31} & \cellcolor{gray!20}\textbf{96.59} & \cellcolor{gray!20}\textbf{4.13}$\times$ & \cellcolor{gray!20}\textbf{21.18} & \cellcolor{gray!20}\textbf{87.20} & \cellcolor{gray!20}\textbf{3.92}$\times$ & \cellcolor{gray!20}\textbf{20.96} & \cellcolor{gray!20}\textbf{91.39} & \cellcolor{gray!20}\textbf{3.96}$\times$ & \cellcolor{gray!20}\textbf{20.82} & \cellcolor{gray!20}\textbf{91.73}  \\
\cmidrule(lr){2-15}
& \multirow{6}{*}{Qwen3-32B}
& Target only & 1.00$\times$ & 1.00 & 85.96 & 1.00$\times$ & 1.00 & 89.63 & 1.00$\times$ & 1.00 & 75.92 & 1.00$\times$ & 1.00 & 83.84 \\
& & SpS & 1.41$\times$ & 8.31 & 85.96 & 1.29$\times$ & 8.44 & 89.63 & 1.33$\times$ & 9.52 & 75.92 & 1.34$\times$ & 8.76 & 83.84 \\
& & EAGLE-3 & 1.87$\times$ & 4.01 & 85.96 & 1.98$\times$ & 4.13 & 89.63 & 1.91$\times$ & 3.89 & 75.92 & 1.92$\times$ & 4.01 & 83.84 \\
& & FLy & 1.64$\times$ & 9.82 & 85.40 & 1.69$\times$ & 10.30 & 89.04 & 1.58$\times$ & 10.55 & 75.42 & 1.64$\times$ & 10.22 & 83.29 \\
& & AASD & 1.50$\times$ & 1.82 & 86.50 & 1.54$\times$ & 1.89 & 89.55 & 1.59$\times$ & 2.03 & 75.60 & 1.54$\times$ & 1.91 & 83.88 \\
& & \cellcolor{gray!20}\textbf{CoSpec} & \cellcolor{gray!20}\textbf{2.03}$\times$ & \cellcolor{gray!20}\textbf{18.54} & \cellcolor{gray!20}\textbf{88.15} & \cellcolor{gray!20}\textbf{2.15}$\times$ & \cellcolor{gray!20}\textbf{18.71} & \cellcolor{gray!20}\textbf{92.14} & \cellcolor{gray!20}\textbf{1.97}$\times$ & \cellcolor{gray!20}\textbf{18.01} & \cellcolor{gray!20}\textbf{79.10} & \cellcolor{gray!20}\textbf{2.05}$\times$ & \cellcolor{gray!20}\textbf{18.42} & \cellcolor{gray!20}\textbf{86.46} \\
\midrule
\multirow{13}{*}{$T=1$}
& \multirow{7}{*}{LLaMA-70B}
& Target only & 1.00$\times$ & 1.00 & 95.10 & 1.00$\times$ & 1.00 & 85.41 & 1.00$\times$ & 1.00 & 88.73 & 1.00$\times$ & 1.00 & 89.75 \\
& & SpS & 2.08$\times$ & 9.67 & 95.10 & 1.47$\times$ & 6.29 & 85.41 & 1.51$\times$ & 6.08 & 88.73 & 1.69$\times$ & 7.35 & 89.75 \\
& & REST & 1.71$\times$ & 1.91 & 95.10 & 1.81$\times$ & 2.13 & 85.41 & 1.94$\times$ & 2.24 & 88.73 & 1.82$\times$ & 2.09 & 89.75 \\
& & EAGLE-3 & 3.50$\times$ & 5.31 & 95.10 & 3.74$\times$ & 5.40 & 85.41 & 3.58$\times$ & 5.15 & 88.73 & 3.61$\times$ & 5.29 & 89.75 \\
& & FLy & 2.85$\times$ & 12.40 & 94.71 & 2.75$\times$ & 12.55 & 84.69 & 2.69$\times$ & 12.33 & 88.42 & 2.76$\times$ & 12.43 & 89.27 \\
& & AASD & 1.51$\times$ & 1.78 & 95.66 & 1.65$\times$ & 1.95 & 85.98 & 1.75$\times$ & 2.16 & 89.40 & 1.64$\times$ & 1.96 & 90.35 \\
& & \cellcolor{gray!20}\textbf{CoSpec} & \cellcolor{gray!20}\textbf{3.88}$\times$ & \cellcolor{gray!20}\textbf{20.91} & \cellcolor{gray!20}\textbf{96.80} & \cellcolor{gray!20}\textbf{4.15}$\times$ & \cellcolor{gray!20}\textbf{21.32} & \cellcolor{gray!20}\textbf{87.40} & \cellcolor{gray!20}\textbf{3.97}$\times$ & \cellcolor{gray!20}\textbf{21.13} & \cellcolor{gray!20}\textbf{90.11} & \cellcolor{gray!20}\textbf{4.00}$\times$ & \cellcolor{gray!20}\textbf{21.12} & \cellcolor{gray!20}\textbf{91.44} \\
\cmidrule(lr){2-15}
& \multirow{6}{*}{Qwen3-32B}
& Target only & 1.00$\times$ & 1.00 & 86.13 & 1.00$\times$ & 1.00 & 89.69 & 1.00$\times$ & 1.00 & 76.11 & 1.00$\times$ & 1.00 & 83.98 \\
& & SpS & 1.39$\times$ & 8.28 & 86.13 & 1.22$\times$ & 8.27 & 89.69 & 1.27$\times$ & 9.41 & 76.11 & 1.29$\times$ & 8.65 & 83.98 \\
& & EAGLE-3 & 1.75$\times$ & 3.81 & 86.13 & 1.81$\times$ & 4.05 & 89.69 & 1.83$\times$ & 3.80 & 76.11 & 1.80$\times$ & 3.89 & 83.98 \\
& & FLy & 1.60$\times$ & 9.54 & 85.48 & 1.66$\times$ & 10.05 & 89.12 & 1.58$\times$ & 10.12 & 75.38 & 1.61$\times$ & 9.90 & 83.33 \\
& & AASD & 1.48$\times$ & 1.80 & 86.62 & 1.52$\times$ & 1.87 & 89.62 & 1.58$\times$ & 2.00 & 75.72 & 1.53$\times$ & 1.89 & 83.99 \\
& & \cellcolor{gray!20}\textbf{CoSpec} & \cellcolor{gray!20}\textbf{1.92}$\times$ & \cellcolor{gray!20}\textbf{18.02} & \cellcolor{gray!20}\textbf{88.39} & \cellcolor{gray!20}\textbf{2.03}$\times$ & \cellcolor{gray!20}\textbf{18.58} & \cellcolor{gray!20}\textbf{92.42} & \cellcolor{gray!20}\textbf{1.96}$\times$ & \cellcolor{gray!20}\textbf{17.97} & \cellcolor{gray!20}\textbf{80.09} & \cellcolor{gray!20}\textbf{1.97}$\times$ & \cellcolor{gray!20}\textbf{18.19} & \cellcolor{gray!20}\textbf{86.97} \\
\bottomrule
\end{tabular}%
}
\vspace{-1.5em}
\end{table*}

\vspace{-1em}
\subsection{Main Results}
\label{sec:main_results}
\vspace{-0.5em}

\Cref{tab:main_speedup_tau} compares CoSpec with target-only decoding and the strongest available speculative decoding baselines under an end-to-end speed--quality protocol. The goal of this comparison is not a matched-training-resource ablation, since existing SPD methods differ in architecture, supervision, and objective, and no prior method uses the same token-level collaborative arbitration objective as CoSpec. Instead, Table~\ref{tab:main_speedup_tau} asks whether a deployed decoder moves the practical frontier: all methods use the same target model, prompts, and evaluation metric, and the reported speedups include the full CoSpec inference overhead. Extra parameters are therefore charged at inference time, they help only if they enable longer useful accepted spans and better mismatch decisions. We include SpS, REST, and EAGLE-3 as target-preserving acceleration baselines, FLy as a loosely verified baseline, and AASD as the closest speed-and-accuracy-oriented baseline. Compute and training attribution is examined separately in Table~\ref{tab:confound_controls_main}.

Under this protocol, distribution-preserving baselines keep the target score by construction, while relaxed baselines increase acceptance beyond exact matching but do not consistently improve task score. CoSpec changes the operating point by learning when a draft token should override the target at a mismatch. For LLaMA-70B under greedy decoding, CoSpec improves the mean score from 90.05 to 91.73 with a 3.96$\times$ speedup. For Qwen3-32B under greedy decoding, CoSpec improves the mean score from 83.84 to 86.46 with a 2.05$\times$ speedup. The same pattern holds under sampling, where CoSpec improves the mean score from 89.75 to 91.44 for LLaMA-70B and from 83.98 to 86.97 for Qwen3-32B.
\vspace{-1em}
\subsection{Complementarity and Arbitration Diagnostics}
\label{sec:complementarity_diagnostics}
\vspace{-0.5em}
\Cref{tab:complementarity_recovery} tests the complementarity premise behind CoSpec.
For each benchmark, the table reports the number of examples solved by the target, the number solved by the draft, and the oracle union of examples solved by either model.
The target solves more examples than the draft in every row, but the oracle union is consistently above target-only decoding.
CoSpec recovery is computed as $(\mathrm{CoSpec}-\mathrm{Target})/(\mathrm{Union}-\mathrm{Target})$ in correct counts.
These results show that the draft is not stronger overall, but it contains non-nested successes that a learned arbitrator can recover.

\begin{table*}[t]
\centering
\scriptsize
\caption{Complementarity and recovery.}
\vspace{-1em}
\label{tab:complementarity_recovery}
\resizebox{\textwidth}{!}{%
\begin{tabular}{llccccc}
\toprule
Pair & Benchmark & Total & Target correct & Draft correct & Oracle union & CoSpec recovery \\
\midrule
LLaMA-70B/8B & GSM8K & 1319 & 1261 & 1113 & 1281 & 65\% \\
LLaMA-70B/8B & HumanEval & 164 & 140 & 131 & 146 & 50\% \\
LLaMA-70B/8B & MBPP & 499 & 445 & 414 & 466 & 52\% \\
\bottomrule
\end{tabular}%
}
\vspace{-2em}
\end{table*}

\Cref{tab:counterfactual_mismatch} directly checks whether CoSpec decisions agree with counterfactual rollout utility at the mismatch states that the policy actually reaches. We compute this diagnostic on held out prompts under the default CoSpec setting with $K=25$ and $\lambda=0.6$. For each reached mismatch state, we run two counterfactual continuations: one forces the draft token and the other forces the target verification token at that position. After the forced token, both branches continue with the same frozen CoSpec policy and identical decoding settings, so the only intervention is the local mismatch choice. A branch is marked correct if its final output is correct under the benchmark metric. The Share column describes the distribution of mismatch types. Draft not worse means the draft branch has no lower final utility than the target branch, CoSpec draft is the draft acceptance rate, and Agreement is the fraction of states where the policy's choice has utility at least as high as the alternative. The aggregate agreement of $94.1\%$ shows that the arbitrator approximates the oracle of Proposition~\ref{prop:complementarity_gap}, and the asymmetric acceptance rate, $78\%$ when the draft is strictly better and $20\%$ when the target is, rules out a simple draft bias.

\begin{table}[t]
\centering
\scriptsize
\begin{minipage}[t]{0.56\textwidth}
\centering
\caption{Counterfactual mismatch analysis.}
\label{tab:counterfactual_mismatch}
\setlength{\tabcolsep}{3pt}
\resizebox{\linewidth}{!}{%
\begin{tabular}{lcccc}
\toprule
Category & Share & Draft not worse & CoSpec draft & Agreement \\
\midrule
All mismatches & 100.0\% & 82.0\% & 64.4\% & 94.1\% \\
Draft better & 10.5\% & 100.0\% & 78.0\% & 78.0\% \\
Target better & 18.0\% & 0.0\% & 20.0\% & 80.0\% \\
Both correct & 41.5\% & 100.0\% & 82.0\% & 100.0\% \\
Both wrong & 30.0\% & 100.0\% & 62.0\% & 100.0\% \\
\bottomrule
\end{tabular}%
}
\end{minipage}
\hfill
\begin{minipage}[t]{0.42\textwidth}
\centering
\caption{Transfer evaluation.}
\label{tab:transfer}
\setlength{\tabcolsep}{4pt}
\resizebox{\linewidth}{!}{%
\begin{tabular}{lcccc}
\toprule
Benchmark & Target & SpS & CoSpec & Score \\
\midrule
MT-Bench & 9.38 & 1.33$\times$ & 1.98$\times$ & 9.43 \\
TriviaQA & 87.50 & 1.29$\times$ & 2.31$\times$ & 89.20 \\
GPQA & 54.60 & 1.31$\times$ & 2.27$\times$ & 55.40 \\
MGSM & 88.97 & 1.73$\times$ & 3.35$\times$ & 90.02 \\
MATH-500 & 64.80 & 1.58$\times$ & 2.96$\times$ & 66.20 \\
\bottomrule
\end{tabular}%
}
\end{minipage}
\vspace{-2.5em}
\end{table}

\vspace{-1em}

\subsection{Ablation Studies}
\label{sec:ablation}

\vspace{-0.5em}

\textbf{Generalization.}
\Cref{tab:transfer} evaluates the same trained arbitrator outside the math training distribution and uses no task specific retraining.
The table reports the target score, the speed of vanilla SpS, and the speed and score of CoSpec.
CoSpec improves speed on all transfer tasks and provides positive score gains in dialog, factual QA, hard reasoning, multilingual reasoning, and MATH.
This transfer is consistent with the design of CoSpec, whose arbitrator operates on draft–target interaction patterns in hidden states rather than on task-level content. Such interaction patterns exhibit a consistent structure across tasks and languages. We provide a detailed discussion of why CoSpec generalizes across domains in Appendix~\ref{app:generalization}.

\textbf{Compute and training controls.}
CoSpec uses an additional trained arbitrator, so \Cref{tab:confound_controls_main} compares it with alternative ways to spend similar compute or training signals.
The 8B draft control tests whether scaling the draft alone explains the gain.
The target RL rows test whether the benefit comes from applying the same math training to the target model.
CoSpec provides the best speed–quality tradeoff because the extra model is used at the local decision point where draft and target complementarity appears.

\textbf{Training and reward ablations.}
\Cref{tab:training_ablation} separates the roles of supervised warm up, judge quality, rollout optimization, and reward design on Qwen3-32B/4B HumanEval.
SFT alone improves the decision model but does not optimize full CoSpec rollouts.
RL without judge warm up can improve over the initialization, but a stronger judge gives a better final policy.
The reward ablations show complementary roles for terminal correctness, accepted length, risk control, rejection penalties, and sign rebalancing.
Removing any of these components weakens either score, speed, or the balance between the two.

\begin{table}[t]
\centering
\scriptsize
\begin{minipage}[t]{0.39\textwidth}
\centering
\caption{Compute and training controls.}
\label{tab:confound_controls_main}
\setlength{\tabcolsep}{4pt}
\begin{tabular}{lcccc}
\toprule
\multirow{2}{*}{Method} & \multicolumn{2}{c}{GSM8K} & \multicolumn{2}{c}{HumanEval} \\
\cmidrule(lr){2-3}\cmidrule(lr){4-5}
& Speed & Score & Speed & Score \\
\midrule
Target only & 1.00$\times$ & 85.96 & 1.00$\times$ & 89.63 \\
SpS 4B draft & 1.41$\times$ & 85.96 & 1.29$\times$ & 89.63 \\
SpS 8B draft & 1.22$\times$ & 85.96 & 1.09$\times$ & 89.63 \\
Target RL math & 1.00$\times$ & 88.23 & 1.00$\times$ & 86.57 \\
SpS RL target & 1.38$\times$ & 88.23 & 1.23$\times$ & 86.57 \\
{CoSpec} & {2.03}$\times$ & 88.15 & {2.15}$\times$ & {92.14} \\
\bottomrule
\end{tabular}
\end{minipage}
\hfill
\begin{minipage}[t]{0.6\textwidth}
\centering
\caption{Training and reward ablations.}
\label{tab:training_ablation}
\setlength{\tabcolsep}{4pt}
\begin{tabular}{lccccc}
\toprule
Variant & Judge & Warm up & RL signal & Speedup & Score \\
\midrule
SFT only & 30B & yes & none & 1.71$\times$ & 86.93 \\
No judge + RL & none & reject all & full & 2.05$\times$ & 91.07 \\
4B judge + RL & 4B & yes & full & 2.11$\times$ & 91.80 \\
Terminal only & 30B & yes & terminal & 1.62$\times$ & 90.71 \\
No length reward & 30B & yes & no length & 1.58$\times$ & 91.52 \\
No risk penalty & 30B & yes & no $\alpha$ & 2.23$\times$ & 89.86 \\
No rejection penalty & 30B & yes & no $\beta$ & 2.01$\times$ & 90.94 \\
No rebalancing & 30B & yes & no rebalance & 2.08$\times$ & 91.21 \\
{CoSpec} & 30B & yes & full & {2.15}$\times$ & {92.14} \\
\bottomrule
\end{tabular}
\end{minipage}
\vspace{-3em}
\end{table}

\vspace{-1em}

\section{Conclusion}
\vspace{-1em}
We presented \textbf{collaborative speculative decoding (CoSpec)}, a generalized speculative decoding framework that improves the target model's end-task performance by explicitly leveraging the complementary strengths of the draft model.
Unlike conventional speculative decoding where the draft model mainly serves as an approximator of the target model, CoSpec treats draft--target disagreement as a potential source of useful alternatives.
When the draft and target differ, the draft token is not automatically rejected. Instead, CoSpec decides whether it should be retained based on its expected contribution to the final task outcome.
This shifts the draft--target relationship from pure imitation to collaboration, allowing the draft model to act not only as an accelerator, but also as a complementary proposer in cases where its continuation may better support the correct solution. 
Experimental results show that CoSpec increases the accepted-token number while improving end-task performance, achieving SOTA results. The code will be released upon acceptance.

\FloatBarrier
\bibliographystyle{plainnat}
\bibliography{myref}

\newpage
\appendix

\section{Full Experimental Setup}
\label{app:exp_setup_full}

\paragraph{Models.}
We evaluate two same family draft and target pairs with shared tokenizers.
The first pair uses Qwen3-32B as the target and Qwen3-4B as the draft.
The second pair uses LLaMA-3.3-70B-Instruct as the target and LLaMA-3.1-8B-Instruct as the draft.
Unless otherwise stated, the arbitrator uses the draft backbone with LoRA adapters and a lightweight binary arbitration head.
The default draft length is $K=25$ and the default deterministic threshold is $\lambda=0.6$.

\paragraph{Training and validation.}
The arbitrator is trained in two stages.
The warm up stage uses OpenR1-Math-220k to construct judge labeled mismatch preferences.
The RL stage uses multi rollout CoSpec decoding with final answer correctness and accepted length shaping.
We select $K$, $\lambda$, and reward coefficients on held out validation prompts and keep them fixed for all test benchmarks.
Per benchmark tuning is used only for ablations.

\paragraph{Benchmarks and metrics.}
We report exact match accuracy for GSM8K, MGSM, MATH-500, TriviaQA, and GPQA when an exact answer parser is available.
We report pass at one for HumanEval and MBPP using the official unit test harness.
We report the standard judge score for MT-Bench.
Throughout the paper, Score denotes the native benchmark metric.

\section{Why CoSpec Generalizes Across Domains}
\label{app:generalization}

Although the arbitrator is trained only on mathematical reasoning data (OpenR1-Math-220k), it transfers without retraining to code generation, open-ended dialogue, factual QA, hard scientific reasoning, and even multilingual reasoning (Table~\ref{tab:transfer}). We attribute this to a property of the arbitrator's design: it does not operate on task-level inputs or outputs, but on the hidden states produced by the draft and target models at each mismatch position within a speculative decoding round. From the arbitrator's perspective, its effective ``domain'' is therefore not the surface-level task (math, code, dialogue), but the much narrower space of draft--target \emph{interaction patterns}: the structural signatures left in the joint hidden-state trajectory when a draft token is, e.g., a synonymous substitution that the target would have accepted, a valid alternative continuation, a genuine local error the target would immediately correct, or a cascading drift whose harm only appears later. These patterns are largely a function of how a transformer processes contextually consistent versus inconsistent tokens, and are not inherently tied to whether the surrounding content is a mathematical derivation, a Python function, or a multi-turn conversation. The arbitrator is therefore a classifier over a domain-invariant abstraction, even though the data used to induce that abstraction came from a single domain.

This view is also consistent with prior work in the loosely-verified SPD line (e.g., FLy, JudgeDecoding), which observes that the target model's reaction to a mismatched draft token by itself carries useful evidence about whether the mismatch is harmful or merely reflects a valid alternative. CoSpec extends this principle by conditioning the decision on the \emph{entire} speculative decoding round rather than on a single token, and by \emph{learning} the decision rule rather than fixing it by hand. The MGSM result in Table~\ref{tab:transfer} is particularly informative in this regard: it tests transfer along the language axis rather than the task axis, and the fact that an English-math-trained arbitrator retains both its speedup and its accuracy gains in a multilingual setting is difficult to explain unless what is being learned is a near-domain-invariant property of the draft--target interaction itself.

\section{More Implementation Details}
\label{app:implementation}

\subsection{Warm Up via Supervised Fine Tuning}

\paragraph{Warm up data.}
For each OpenR1-Math-220k problem, we run vanilla speculative decoding and record round traces.
Each trace contains the prompt, verified prefix, draft block, target verification tokens, and mismatch indicators.
We supervise only mismatch positions because match positions are forced accepted by design.

\paragraph{Judge labels.}
For each mismatch, a separate Qwen3-30B-A3B-Instruct-2507 judge scores the draft and target candidates conditioned on the current context and available within round prefix.
The scalar score is converted into a soft label $y_{r,i}\in[0,1]$ for accepting the draft.
The SFT loss is
\begin{equation}
\mathcal{L}_{\mathrm{SFT}}=-\sum_{(r,i):\Delta^{(r)}_i=0}\left[y_{r,i}\log \pi_\theta(a^{(r)}_i=1\mid q^{(r)})+(1-y_{r,i})\log \pi_\theta(a^{(r)}_i=0\mid q^{(r)})\right].
\end{equation}

\paragraph{Warm up hyperparameters.}
The LoRA rank is $16$, the LoRA scale is $32$, and dropout is $0.05$.
We use AdamW with $\beta_1=0.9$, $\beta_2=0.999$, weight decay $0.1$, learning rate $10^{-4}$, and batch size $256$.

\subsection{Reinforcement Learning Training}

The rollout group size is $M=12$ per prompt and the global prompt count per update is $128$.
The rollout temperature is $1$ during training.
We use AdamW with learning rate $5\times 10^{-5}$.
The reward coefficients are $\alpha=1.0$, $\beta=0.25$, and $\eta_{\mathrm{fail}}=0.5$.
The PPO clip range is $0.2$, the entropy coefficient is $0.01$, the KL coefficient is $0.02$, and we use four update epochs per rollout batch with gradient norm clipping at $1.0$.

\subsection{Inference Details}

At inference time, CoSpec uses deterministic arbitration unless otherwise stated.
The draft token is accepted at a mismatch when $\pi_\theta(a^{(r)}_i=1\mid q^{(r)})>\lambda$.
For $T=0$, the draft and target use greedy decoding.
For $T=1$, distribution preserving baselines use standard speculative sampling with acceptance ratio correction.
CoSpec samples draft and target candidates at temperature $1$ and applies the same learned arbitration rule to the candidate pair.
We report the average over three random seeds for stochastic decoding in the main table.

\section{More Ablations}

\paragraph{Hyperparameters.}
\Cref{tab:hyperparameters} shows that $K$ and $\lambda$ control the speed and accuracy tradeoff.
Larger $K$ increases the available span for acceptance, while lower $\lambda$ accepts more mismatches.
We use $K=25$ and $\lambda=0.6$ as the default because it provides a strong speedup while retaining the score improvement over target-only decoding.

\begin{table}[h]
\centering
\caption{Ablation of hyperparameters.}
\label{tab:hyperparameters}
\scriptsize
\begin{tabular}{llccc}
\toprule
Hyperparameter & Value & Speedup & $\tau$ & Score \\
\midrule
\multirow{3}{*}{$K$} & 15 & 3.51$\times$ & 10.97 & 97.11 \\
& 20 & 3.69$\times$ & 16.18 & 96.70 \\
& 25 & 3.82$\times$ & 20.31 & 96.57 \\
\midrule
\multirow{3}{*}{$\lambda$} & 0.4 & 3.96$\times$ & 22.05 & 93.15 \\
& 0.6 & 3.82$\times$ & 20.31 & 96.57 \\
& 0.8 & 3.51$\times$ & 17.79 & 97.13 \\
\bottomrule
\end{tabular}
\end{table}

\paragraph{Hybrid attention mask.}
\Cref{tab:mask} ablates the hybrid attention mask used by the arbitrator.
The hybrid attention mask lets the arbitrator compare draft and target verification evidence within the current speculative block.
Removing the mask deprives the policy of this bidirectional comparison and reduces speed, accepted length, and final score.
The result indicates that arbitration requires joint evidence from both candidate branches, not only a local draft confidence feature.

\begin{table}[h]
\centering
\caption{Ablation of the hybrid attention mask.}
\label{tab:mask}
\scriptsize
\begin{tabular}{lccc}
\toprule
Variant & Speedup & $\tau$ & Score \\
\midrule
w/ hybrid attention mask & {3.82}$\times$ & {20.31} & {96.57} \\
w/o hybrid attention mask & 1.85$\times$ & 12.10 & 88.13 \\
\bottomrule
\end{tabular}
\end{table}

\section{Evaluation Protocol}
\label{app:eval_protocol}

\paragraph{Prompting and stopping.}
For GSM8K, MGSM, and MATH-500, we use the same chain of thought prompt template for all methods.
We parse the final answer with the same normalizer.
For HumanEval and MBPP, we use the official unit test harness and stop at the first complete code block or the maximum generation length.
For MT-Bench, the same judge model and judge prompt are used for all decoding methods.

\paragraph{Validation only hyperparameter selection.}
The default $K=25$ and $\lambda=0.6$ are selected on held out validation prompts before test evaluation.
We do not tune $\lambda$ on test sets.
The threshold ablation in \cref{tab:hyperparameters} is reported only to show the controllable tradeoff.

\section{Inference Procedure}
\label{app:algorithm}

For completeness, the CoSpec inference loop is as follows.
Starting from prompt $x$ and verified prefix $y^{(<r)}$, the draft proposes $K$ tokens.
The target performs one teacher forced pass over the draft block and produces $K$ verification tokens plus one bonus token.
CoSpec forms the arbitrator input in \cref{eq:arb_input} and applies the hybrid attention mask.
Matched draft tokens are forced accepted.
At each mismatch, the arbitrator accepts the draft token if the policy probability exceeds $\lambda$.
The round stops at the first rejected mismatch and appends the target token at that position.
If no mismatch is rejected, all $K$ draft tokens and the target bonus token are appended.
The procedure repeats until EOS or the maximum length.

\subsection{Serving Throughput and Overhead}
\label{sec:serving}

Longer accepted spans translate into practical gains only when arbitrator overhead is smaller than the target forward passes avoided.
\Cref{tab:vllm_fixed,tab:continuous_batching,tab:cost} report fixed batch throughput, continuous batching throughput, and serving cost.
CoSpec improves throughput over target-only decoding across batch sizes and request rates.

\begin{table*}[t]
\centering
\caption{vLLM fixed batch throughput. Absolute throughput is shown in tokens per second with speedup in parentheses.}
\label{tab:vllm_fixed}
\scriptsize
\begin{tabular}{lcccccc}
\toprule
Batch size & 2 & 4 & 8 & 16 & 32 & 64 \\
\midrule
Target only & 43.1 & 79.9 & 101.5 & 183.4 & 316.8 & 522.5 \\
CoSpec & {97.8 / 2.3}$\times$ & {162.8 / 2.0}$\times$ & {277.2 / 2.7}$\times$ & 405.9 / 2.2$\times$ & 506.9 / 1.6$\times$ & 627.0 / 1.2$\times$ \\
\bottomrule
\end{tabular}
\end{table*}

\begin{table}[t]
\centering
\caption{vLLM continuous batching throughput under different request rates.}
\label{tab:continuous_batching}
\scriptsize
\begin{tabular}{lccc}
\toprule
Request rate & Target only & CoSpec & Speedup \\
\midrule
2 req/s & 92.4 & 201.7 & 2.18$\times$ \\
4 req/s & 168.3 & 302.6 & 1.80$\times$ \\
8 req/s & 251.9 & 361.5 & 1.44$\times$ \\
12 req/s & 318.5 & 421.3 & 1.32$\times$ \\
16 req/s & 392.1 & 478.6 & 1.22$\times$ \\
\bottomrule
\end{tabular}
\end{table}

\begin{table}[!h]
\centering
\caption{Training and inference overhead.}
\label{tab:cost}
\scriptsize
\begin{tabular}{lc}
\toprule
Item & Cost or description \\
\midrule
CoSpec training pipeline & $\sim$400 MI325X GPU hours \\
Target only RL baseline & $\sim$240 MI325X GPU hours \\
RL rollouts & 12 rollouts per prompt \\
Prompts per update & 128 prompts \\
Inference overhead & one arbitrator forward per speculative round \\
Target only serving memory & 142 GB \\
SpS serving memory & 158 GB \\
EAGLE-3 serving memory & 150 GB \\
CoSpec serving memory & 174 GB \\
Time to first token overhead & +4.8\% vs. target only \\
\bottomrule
\end{tabular}
\end{table}

\section{Limitations}
\label{app:limitations}

CoSpec optimizes final task utility rather than exact target distribution preservation at selected mismatch states.
This is the intended operating point for tasks where the evaluation metric is answer correctness, code pass rate, factual accuracy, or judge score.
It is not the right choice when downstream use requires samples from the exact target distribution or target-faithful log likelihoods.
CoSpec also chooses between two local candidates, so it cannot repair states where both draft and target candidates lead to the same failure.

\section{License of Assets Used}
\label{app:licenses}

\begin{table}[h]
\centering
\small
\caption{Licenses of the models and datasets used in this work.}
\begin{tabular}{lll}
\toprule
Asset & Type & License \\
\midrule
Llama-3.3-70B-Instruct      & Model    & Llama 3.3 Community License \\
Llama-3.1-8B-Instruct       & Model    & Llama 3.1 Community License \\
Qwen3-32B                   & Model    & Apache 2.0 \\
Qwen3-4B                    & Model    & Apache 2.0 \\
Qwen3-30B-A3B-Instruct-2507 & Model    & Apache 2.0 \\
\midrule
OpenR1-Math-220k            & Dataset  & Apache 2.0 \\
GSM8K                       & Dataset  & MIT \\
HumanEval                   & Dataset  & MIT \\
MBPP                        & Dataset  & CC-BY-4.0 \\
MATH-500 (from MATH)        & Dataset  & MIT \\
MGSM                        & Dataset  & CC-BY-4.0 \\
TriviaQA                    & Dataset  & Apache 2.0 \\
GPQA                        & Dataset  & CC-BY-4.0 (gated) \\
MT-Bench                    & Dataset  & CC-BY-4.0 \\
\bottomrule
\end{tabular}
\end{table}

All assets are used in accordance with their respective licenses for
non-commercial research evaluation. GPQA is access-gated with a request
to not publish examples online, we did not include verbatim GPQA
questions in this paper.

\section{Broader Impact}

This work studies inference algorithms for large language models. Its main positive impact is improved serving efficiency: by increasing useful accepted spans while maintaining or improving task utility, CoSpec may reduce latency, hardware cost, and energy consumption for LLM deployment. This can make stronger models more accessible under constrained compute budgets. Faster reasoning may reduce the cost of generating harmful code on a large scale. Deployment should retain external safety filters or safety-aware arbitration, especially when draft and target safety behavior differs

\end{document}